\DeclareMathOperator*{\argmax}{arg\,max}
\DeclareMathOperator*{\argmin}{arg\,min}
\theoremstyle{plain}
\newtheorem{lemma}{Lemma}[]
\newtheorem{corollary}{Corollary}[]
\newcommandx{\pto}[0]{\overset{P}{\to}}
\newcommand{\blind}{1}
\begin{document}

\def\spacingset#1{\renewcommand{\baselinestretch}%
{#1}\small\normalsize} \spacingset{1}

\if1\blind
{
  \title{\bf Lassoed Tree Boosting}
  \author{Alejandro Schuler
    \hspace{.2cm}\\
    Division of Biostatistics, University of California, Berkeley\\
    \\
    Yi Li \\
    Division of Biostatistics, University of California, Berkeley \\
    \\
    Mark van der Laan \\
    Division of Biostatistics, University of California, Berkeley}
  \maketitle
} \fi

\if0\blind
{
  \bigskip
  \bigskip
  \bigskip
  \begin{center}
    {\LARGE\bf Lassoed Tree Boosting}
\end{center}
  \medskip
} \fi

\bigskip
\begin{abstract}
 
Gradient boosting performs exceptionally in most prediction problems and scales well to large datasets. 
In this paper we prove that a ``lassoed'' gradient boosted tree algorithm with early stopping achieves faster than $n^{-1/4}$ L2 convergence in the large nonparametric space of cadlag functions of bounded sectional variation. 
This rate is remarkable because it does not depend on the dimension, sparsity, or smoothness.
We use simulation and real data to confirm our theory and demonstrate empirical performance and scalability on par with standard boosting.
Our convergence proofs are based on a novel, general theorem on early stopping with empirical loss minimizers of nested Donsker classes. 
\end{abstract}

\noindent%
{\it Keywords:}  
nonparametric regression, high-dimensional regression, convergence rate, gradient boosting
\vfill

\newpage

\section{Introduction}

In regression our task is to find a function that maps features to an outcome such that the expected loss is minimized \cite{Hastie2009-ix}. In the past decades a huge number of flexible regression methods have been developed that effectively search over high- or infinite-dimensional function spaces. These are often collectively called ``machine learning'' methods for regression. 
Gradient-boosted trees in particular have proven consistently successful in real-world prediction settings \cite{Friedman2001-jo, Bojer2021-np, xgboost} and, unlike deep learning approaches, require minimal parameter tuning or expert knowledge.

L2 convergence is a well-studied property of regression algorithms. L2 convergence rate measures how quickly generalization MSE decreases as the size of the training sample increases. Fast convergence rates are desirable in predictive settings because they (asymptotically) guarantee more efficient use of limited data. 

In many causal inference settings fast rates are in fact \textit{required} to build valid confidence intervals. For example, when estimating the average treatment effect from observational data, a necessary condition for the asymptotic normality of the TMLE and AIPW estimators is that the propensity score and outcome regression models converge to their respective truths in root-mean-square generalization error at a rate of $o_P(n^{-1/4})$ \cite{Tsiatis2007-py, Van_der_Laan2003-du}. Thus the development of fast-converging, nonparametric regression methods is critical for efficient statistical inference.

There are many rate results for gradient boosting in the literature but there are gaps between theory and practice. Most results apply only to abstracted or special-case versions of the boosting algorithm in question that do not reflect popular implementations (they tend to ignore the greediness of regression trees; i.e. CART does not generate a globally optimal tree) \cite{buhlmann2003boosting, blanchard2003rate, bickel2006some, buhlmann2002consistency, mannor2003greedy, zhang2005boosting}. Others impose limiting assumptions on the true regression functions (e.g. smoothness, sparsity) \cite{buhlmann2003boosting, luo2016high, mannor2002consistency}.

Some assumptions about the true regression function are always required becasue the convergence rate for a regression estimator depends on the function class that is being searched. It is difficult to prove rates faster than $o_P(n^{-1/4})$ without assuming that the regression functions are differentiable some number of times or only depend on a very low-dimensional subset of predictors. In fact, the best-possible rates in these settings are very slow without heroic assumptions \cite{Stone1982-up}. This is often referred to as the \textit{curse of dimensionality} \cite{Robins2008-fx, Fang2019-or}. In 2015, however, van der Laan \cite{hal-og} showed that dimension-free $o_P(n^{-1/4})$ rates (or better) are attainable under more tenable assumptions with a regression method called the \textit{Highly Adaptive Lasso} (HAL) \cite{hal, Fang2019-or, Bibaut2019-zf}. Instead of assuming smoothness or sparsity, HAL assumes that the true function is cadlag of bounded sectional variation. Roughly speaking, this restricts the amount the true function can go up and down over its domain in a total, global sense, instead of restricting its behavior locally. 

Unfortunately, asymptotic behavior is meaningless if in practice the method cannot handle large, high-dimensional data or if empirical performance is poor. Although HAL is theoretically powerful, its memory use and computational cost scale exponentially with covariate dimension \cite{hal, Fang2019-or} and ``scalable'' modifications to the algorithm have not been theoretically proven to preserve the rate \cite{hal9001}.  

Here we present an algorithm (lassoed boosting) that is scalable \textit{and} guarantees fast convergence rates in a meaningfully large nonparametric space. Our proposed method, which we call ``lassoed tree boosting''\footnote{In a previous version of this paper we referred to this algorithm as the ``selectively adaptive lasso'' (SAL) to emphasize the connection to HAL. In this version we have chosen instead to emphasize the more concrete connections to gradient boosting.} (LTB; or just ``lassoed boosting''), is an ensemble of boosted regression trees with the tree weights set by lasso regression and tuned with early-stopping.
As a consequence, it is trivial to implement with off-the-shelf software and scalable to large datasets. We show that this exact algorithm, without modification, retains HAL's convergence rate in the same nonparametric setting. To accomplish this, we prove some general theoretical results pertaining to empirical loss minimization in nested Donsker classes. 

\section{Notation and Preliminaries}

Throughout the paper we adopt the empirical process notation $Pf = \int f(Z) dP$ and $P_n f = \frac{1}{n}\sum_i f(Z_i)$. In this notation these operators do not average over any potential randomness in $f$, so, for example, $Pf_n$ is a random variable if $f_n$ is a random function learned from data. We reserve $\tilde P_n f_n$ for the case where $f_n$ is random, but independent from the random measure $\tilde P_n$ (e.g. when $\tilde P_n$ is the empirical measure of a validation set).  We use $\|f\|$ to indicate an $L2$ norm $\sqrt{Pf^2}$ unless otherwise noted. To indicate the probability measure of a set we use braces, e.g. $P\{Z < c\}$.

Let $X_i,Y_i \in \mathcal X \times \mathbb R$ be IID across $i$ and with a generic $X,Y$ that have some joint distribution $P$. 
We will take $\mathcal X = [0,1]^p$ without loss of generality for applications with bounded covariates. 
We use $(\mathbf X, \mathbf Y)$ to denote the training data of shape $(n \times p)$ and $n$ and $(\tilde{\mathbf X}, \tilde{\mathbf Y})$ to denote validation data of shape $(\tilde n \times p)$ and $\tilde n$.
When we write e.g. $f(\mathbf X)$ we mean the row-wise application of the function to each $X_i$ with the results concatenated into a column vector. 
If $f(x) = [f_1(x) \dots f_K(x)]$ is vector-valued, we use $f(\mathbf X)$ to denote the matrix with columns $[f_1(\mathbf X) \dots f_K(\mathbf X)]$.

Let $L$ be some loss (e.g. mean-squared error), which we construct such as to take a prediction function $f$ as an input and return a function of $X,Y$ as output. For example, if we want $L$ to be squared error loss, we let $L(f)(X,Y) = (f(X) - Y)^2$. Throughout we abbreviate $Lf = L(f)$.
Let $f = \argmin_{f:\mathcal X \to \mathbb R} PLf$.
This is the standard regression setup where our goal is to estimate the function $f$ from $n$ samples of the vector of predictors $X$ and the outcome $Y$. 

In our theoretical results we give convergence rates of some estimator $f_n$ in terms of the \textit{divergence} (or ``loss-based dissimilarity'') $P(Lf_n -Lf)$. However, we are usually interested in the \textit{L2 norm} $\|f_n - f\|_2 = \sqrt{P(f_n - f)^2}$ of regressions in order to prove efficiency in estimating some target parameter. Convergence in divergence typically implies L2 convergence at the same rate under very mild conditions on the loss function\footnote{When the loss is mean-squared-error, we see directly that $P(L f_n - Lf) = P(f_n - f)^2$ via iterated expectation (conditioning on $X$) and recognizing $f = E[Y|X]$ in this case. 
} which are satisfied in all standard applications \cite{Bibaut2019-zf}.

\paragraph*{Cadlag and Sectional Variation}
Throughout the paper we focus on the class of cadlag functions of bounded sectional variation. ``Cadlag'' (\textit{continu à droite, limites à gauche}) means right-continuous with left limits \cite{neuhaus1971weak}. So, for example, all continuous functions are cadlag. Cadlag functions correspond 1-to-1 with signed measures the same way that cumulative distribution functions correspond with probability measures. We use $\mathcal F$ to denote the set of cadlag functions.

The sectional variation norm\footnote{The \textit{sectional} variation norm is also called ``Hardy-Krause variation''. Sectional variation rectifies a commonly known failure of the conceptually simpler ``Vitali variation'', which is that functions that don't vary in one or more argument are assigned zero variation. Both these notions of variation are distinct from the more common notion of \textit{total} variation  \cite{Fang2019-or} which corresponds to the sum (integral) of the perimeters (Hausdorff measures) of all level sets of a function. All three are equivalent for functions of a single variable, and all capture some notion of how much a function ``goes up and down'' over its whole domain.} of a cadlag function $f$ on $[0,1]^p$ is given by
$\|f\|_v =  \sum_{s \subseteq \{1\dots p\}}\int_{0_s}^{1_s} |df(x_s)|$ where the $x_s$ notation means that all elements of $x$ not in the set of coordinates $s$ are fixed at 0. We use the notation $\|\cdot \|_v$ to distinguish sectional variation norm from a standard L2 norm. Intuitively, a variation norm is a way of measuring how much a function goes up and down in total over its entire domain. For a function of one variable, the variation norm is the total ``elevation gain'' plus ``elevation loss'' over the domain, up to a constant. Given some constant $M$, we use $\mathcal F(M)$ to denote all cadlag functions with $\| f\|_v \le M$.

$\mathcal F(M)$ is an extremely rich class of functions: $f \in \mathcal F(M)$ may be highly nonlinear and even discontinuous. 
Functions that are not in $\mathcal F(M)$ are often pathological, e.g. $f(x) = \cos(1/x)$. In cases where they are not (e.g. when $f(x)$ is the indicator of the unit ball) they can always be approximated arbitrarily well by a function in $\mathcal F(M)$ by increasing the bound $M$ (indeed $\cup_M \mathcal F(M)$ is dense in $L2$).
Throughout, we presume that the truth $f$ is a cadlag function of sectional variation norm. We omit notating the bound $M$ where it is clear from context or not important to the immediate argument.

\paragraph*{Boosted Trees} We presume some familiarity with gradient boosted trees (GBT) but provide a brief review here. A GBT fit $f_K$ is an additive ensemble of $K$ regression trees $h_k$: $f_K(x) = \sum_k^K \epsilon h_k(x)$. The value $\epsilon$ is typically set to some small number and is called the \textit{learning rate}. Each tree $h_k$ in the sequence is found by regressing the current residual $\mathbf Y-f_{K-1}(\mathbf X)$ (for losses besides squared error the residual is different) onto the predictors using a greedy recursive partitioning algorithm. The total number of trees $K$ is often set by \textit{early stopping validation}: at each iteration the validation loss $\tilde P_n L f_K$ is computed and fitting stops and returns $f_K$ if $\tilde P_n L f_K < \tilde P_n L f_{K+1}$, i.e. if validation loss increased after adding the current tree.


\section{Lassoed Tree Boosting}
\label{sec:algorithm}

Lassoed tree boosting is an ensemble of boosted regression trees with the tree weights set by lasso regression and tuned with early-stopping. In more detail, the steps in the algorithm are as follows:
\begin{enumerate}
    \item Fit a gradient boosted tree (GBT) ensemble $f_K(x) = \sum_k^K \epsilon h_k(x)$ to the training data. Hyperparameters may be tuned in any way, but we recommend using early stopping based on validation error to determine the number of trees $K$ and an outer early stopping loop to determine the maximum depth of each tree $D$ at a minimum.
    \item Decompose the fit into a sum of trees $f = \sum_k \beta_k h_k(x)$ and construct design matrices $H_k(\mathbf X) = [h_1 \dots h_k](\mathbf X)$ for the training and validation set where each column contains the predictions from each tree.
    \item Regress the training outcome $Y$ onto the training design matrix $H = H_k(\mathbf X)$ using lasso (i.e. solving $\beta = \argmin_\beta P_n L H\beta + \lambda \|\beta\|_1$). Select $\lambda$ by minimizing validation error over the lasso path. Record the L1 norm $\|\beta\|_1$ and validation loss $\tilde P_n L H \beta$ of all solutions $\beta(\lambda)$ over the regularization path.
    \item Add a number of trees to the GBT ensemble and repeat steps 2 and 3. Stop and return the previous lasso fit if the validation error of the current optimal solution is greater than the validation error of any solution  with smaller L1 norm from a previous iteration. Otherwise repeat this step. The final fit is given by $f(x) = H(x)\beta$.
\end{enumerate}


This algorithm is practical, fast, and scalable. In terms of computation it comes down to fitting a standard boosting model and then solving a small number of moderately large lasso problems. Usually the algorithm will stop after a small number of lasso iterations because we have already done early stopping for the unlassoed boosting model. 

Many GBT packages allow the user to impose an L1 regularization rate on the leaf values of the trees, but this is always approximated in a stagewise fashion \cite{xgboost}.  Similar regularization schemes have been considered in the boosting literature before \cite{mannor2002consistency}. However, for our theory to hold the lasso solution must be recomputed exactly over all trees jointly. A naive implementation would re-optimize all the tree weights after the addition of every weak learner which would add huge computational burden. Our practical innovation is to tie this regularization in with early stopping so that it only needs to be done a small number of times.

Our theoretical innovation is to show that this algorithm is guaranteed by the theory presented in the subsequent sections to attain an $O_P(n^{-1/3}(\log n)^{2(p-1)/3})$ L2 convergence rate in the large, nonparametric space of cadlag functions of bounded sectional variation. This rate is dimension-free up to the log factor. 
Many variations of this algorithm (e.g. cross-validated early stopping) are possible and are supported by our general theory. However, our goal here is to present a canonical algorithm, support it with rigorous theory, and demonstrate practical utility. Further experimentation is certainly warranted in the future.

\section{Convergence Rate}

In this section we show that the lassoed boosting algorithm attains a $O_P(n^{-1/3}(\log n)^{2(p-1)/3})$ L2 convergence rate in the space of cadlag functions of bounded sectional variation. 

We first analyze a simplified version of the lassoed boosting algorithm where a) the boosting ``bases'' are simple step functions $1(c \le x)$ instead of trees and b) the L1 norm bound is set a-priori. In this setting we can finitely enumerate all the possible bases that could be generated by the algorithm for given training data. In the limit where they are all included, we show our simplified lassoed boosting algorithm is identical to HAL (the highly adaptive lasso), which is known to achieve the $O_P(n^{-1/3}(\log n)^{2(p-1)/3})$ rate. We then prove that we can replace the simpler step functions first with rectangular indicators and then with regression trees without sacrificing the rate. 

We then present a novel, general result that shows that early-stopping validation of our lassoed boosting algorithm does not affect that rate: in other words, if we add trees only until validation error increases the asymptotic behavior is the same as if we had included all of them. This result holds uniformly over the order in which bases are added which lets us make practical choices that improve the finite-sample performance of lassoed boosting.

We conclude with some arguments that justify using validation error to select the regularization strength for the lasso provided we monitor the L1 norm of the solutions. 

\subsection{Simple Lassoed Boosting}

We first show that our result is already known to hold for a simple lassoed boosting algorithm where we do not use early stopping and where instead of trees we boost over learners of step functions. 

The class of functions $\tilde{\mathcal H}$ we consider are the indicators $\tilde h_c(x) = 1(c \le x)$ where $c \in \mathbb R^p$ and the inequality is taken to hold in all dimensions. We'll refer to these as step functions.

Presume we have a ``weak learner'' that uses the training data to produce a prediction function that is guaranteed to be in $\tilde{\mathcal H}$. We do not assume that the learner outputs the step function with lowest possible training error, the same way that CART regression trees are not globally optimal. It is instead sensible to imagine a heuristic algorithm based on recursive partitioning that should generally do a reasonable job. 

However, to simplify the task for the weak learner, we can consider searching only a \textit{finite} number of step functions $\tilde H_n \subset \tilde{\mathcal H}$. Two different functions $\tilde h, \tilde h' \in \tilde{\mathcal H}$ are indistinguishable in terms of any training loss if they have identical predictions on the observed data, i.e. $\tilde h'(\mathbf X) = \tilde h(\mathbf X)$. Therefore when finding a basis to approximately minimize the loss it suffices to consider the finite set of step functions $\tilde H_n = \{1(c \le x): c \in C_n\}$ where $C_n$ denotes the smallest lattice that includes all of the points in $(X)_i^n$ as elements. Intuitively, imagine drawing a grid over the covariate space such that our observed data are all at some of the vertices of the grid. Our set of ``knots'' $C_n$ would then be all of the vertices of the resulting grid (see Figure \ref{fig:knots}). An identical argument is used to simplify the optimization for standard regression trees: instead of considering all possible cutpoints to split a given leaf, it is sufficient to look only at the values taken in the data. We write $\tilde H_n$ with the $n$ subscript to indicate that this set is data-dependent and not fixed a-priori. We presume our weak learner produces a learned function in the set $\tilde H_n$. In general the size of $\tilde H_n$ is still massive ($n^p$) so finding a globally optimal basis is combinatorially difficult. We let $\tilde K_n = |C_n| = n^p$ denote the total number of knots in this set for a given dataset.

\begin{figure}[h]
\centering
\begin{tikzpicture}[scale=0.7]
    \foreach \x in {1, 2, 3.5, 5} {
        \draw[gray, very thin] (\x,1) -- (\x,3.7);
    }
    \foreach \y in {1, 2, 3.7} {
        \draw[gray, very thin] (1,\y) -- (5,\y);
    }

    \fill[gray] (1,1) circle (2pt);
    \fill[gray] (1,2) circle (2pt);
    \fill[gray] (1,3.7) circle (2pt);
    \fill[gray] (2,1) circle (2pt);
    \fill[gray] (2,2) circle (2pt);
    \fill[gray] (2,3.7) circle (2pt);
    \fill[gray] (3.5,1) circle (2pt);
    \fill[gray] (3.5,2) circle (2pt);
    \fill[gray] (3.5,3.7) circle (2pt);
    \fill[gray] (5,1) circle (2pt);
    \fill[gray] (5,2) circle (2pt);
    \fill[gray] (5,3.7) circle (2pt);

    \fill (1,1) circle (3pt);
    \fill (2,2) circle (3pt);
    \fill (3.5,3.7) circle (3pt);
    \fill (5,2) circle (3pt);

    \node [below] at (1,1) {(1,1)};
    \node [below] at (2,2) {(2,2)};
    \node [below] at (3.5,3.7) {(3.5,3.7)};
    \node [below] at (5,2) {(5,2)};

    \draw[thick,->] (0,0) -- (6,0) node[anchor=north west] {$x_1$};
    \draw[thick,->] (0,0) -- (0,5) node[anchor=south east] {$x_2$};
\end{tikzpicture}
\caption{Black points represent the 4 observed data points $(X_1, \dots X_4)$ where $X_i \in \mathbb R^2$. Grey points represent the set of knots $C_n$.}
\label{fig:knots}
\end{figure}
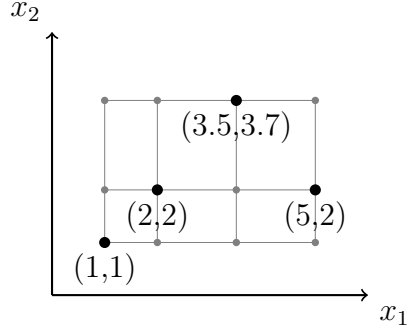

We also introduce one additional modification: if a basis has already been added to the basis matrix and the same basis would be added again in a boosting iteration, a random basis from $\tilde H_n$ is instead selected that has not already been added. In the limit of boosting iterations, we must therefore stop after adding all of the $n^p$ bases in $\tilde H_n$. Finally, after all these have been added, we solve a lasso problem with outcome vector $\mathbf Y$ and predictor matrix $\tilde H_n(\mathbf X)$:

\begin{align}
    \tilde f_n &= \argmin_{f \in \tilde{\mathcal F}_{n}(M) } P_nLf \\ 
    \tilde{\mathcal  F}_{n}(M) &= 
    \left\{
    \begin{array}{l}
        \beta_0 + \tilde H_n(x) \beta \\
        \text{s.t.}\  \|\beta\|_1 \le M
    \end{array}
    \right\}
\end{align}

This precise problem has been extensively studied \cite{hal, hal-og, Fang2019-or} and, in fact, the estimator $\tilde f_n$ converges in divergence to $f$ at a $O_P(n^{-1/3}(\log n)^{2(p-1)/3})$ rate as long as $f \in \mathcal{F}(M)$ (the set of cadlag functions of sectional variation bounded by $M$) \cite{Bibaut2019-zf}. The convergence in divergence implies convergence at the same rate in L2 norm for most common loss functions \cite{Bibaut2019-zf}. It is possible to show a slower (but still impressive) $o_P(n^{-1/4})$ rate using only generic empirical process theory \cite{hal}. The number of knots in $\tilde H_n$ may also be decreased to a specific set of $n(2^p -1)$ points \cite{hal-og} but the construction in \cite{Fang2019-or} is simpler and suffices for our purposes. The estimator $f_n$ above has been called the ``highly adaptive lasso'' (HAL) by van der Laan et al. and this is the name we will use for it as well. Astoundingly, for all of these results the convergence speed does not depend on the covariate dimension $p$ (or does, but only through a log factor), meaning that HAL effectively breaks the asymptotic curse of dimensionality. 

The key to the convergence rate result is that the lasso solution $f_n$ described above is in fact exactly a solution of the \textit{nonparametric} optimization problem \cite{Fang2019-or}:

\begin{align}
    \tilde f_n &= \argmin_{f \in\mathcal  F(M)} P_n Lf  \\ 
    \mathcal  F(M) &= 
    \left\{
    \begin{array}{l}
        f \in \mathcal F \\
        \text{s.t.} \|f\|_v \le M
    \end{array}
    \right\}
\end{align}
Although it is very expressive, the function class $\mathcal F(M)$ of cadlag functions of sectional variation bounded by $M$ is \textit{Donsker}. Generic results from empirical process theory can be used to show fast convergence rates for any empirical minimizer in a Donsker class \cite{Van_der_Vaart2000-yx, hal}.

\subsection{Rectangular Bases}

The next step is to show that we can replace the step function bases $\tilde H_n$ with a richer set of functions (indicators of rectangles). Let $ \bar h_{a,b}(x) = 1(a \le x < b)$ denote the indicator of a rectangle and let $ \bar H_n = \{ \bar h_{a,b}: a,b \in C_n\}$ be our rectangle indicators where $C_n$ is our minimal lattice from before. Denote

\begin{align}
    \bar f_n &= \argmin_{f \in\bar{\mathcal  F}_{n}(M)} P_nLf  \\ 
    \bar{\mathcal  F}_{n}(M) &= 
    \left\{
    \begin{array}{l}
        \beta_0 + \bar H_n(x) \beta \\
        \text{s.t.}\  \|\beta\|_1 \le M
    \end{array}
    \right\}
\end{align}
By adding and subtracting the step functions corresponding to the corners of a given rectangle we can always express a rectangle as a linear combination of exactly $2^p$ steps fixed at all the corners $\mathcal C(a,b)$ of the hypercube $[a,b]$. Specifically, $ \bar h_{a,b} = \sum^{2_p}_{c \in \mathcal C(a,b)} (-1)^{\sum_j 1(c_j=a_j)} \tilde h_c$. 

In a slight abuse of notation, if we take $\tilde H_n$ to be the row vector of all $\tilde K_n$ step functions and we take $\bar H_n$ to be the row vector of all $\bar K_n$ rectangular bases, we can write $\bar H_n = \tilde H_n A$ where $A \in \mathbb R^{\tilde K_n \times \bar K_n}$. By virtue of the above we know each column of $A$ has $2^p$ nonzero entries and that these values are all $\pm 1$. 

What we'd like to show is that $\bar f_n$ (lasso over rectangles) retains the rate of $\tilde f_n$ (lasso over steps). The rate attained by the latter is due to $\tilde {\mathcal F}_n(M) \in \mathcal F(M)$ being a Donsker class having well controlled entropy \cite{Bibaut2019-zf} and so we would like to show that $\bar{\mathcal F}_n(M) \in \mathcal G$ for some class $\mathcal G$ that enjoys similar properties.

To do this, note that $\bar H_n\beta = (\tilde H_n A)\beta$. The coefficients of this function as expressed over the steps are $A\beta = \sum^{\bar K_n}_k A_k \beta_k$ where $A_k$ is the $k$th column of $A$. So $\|A\beta\|_1 \le \sum^{\bar K_n}_k \|A_k\|_1 |\beta_k| = \sum^{\bar K_n}_k 2^p |\beta_k| = 2^p\|\beta\|_1$. Thus 
$\tilde{\mathcal F}_n(M) \subset \bar{\mathcal F}_n(M) \subset \tilde{\mathcal F}_n( M 2^{p} ) \subset \mathcal F(M2^{p} )$. This establishes $\mathcal F(M2^{p})$ as the parent Donsker class than $\bar f_n$ lives in.

Let $f$ be our true loss minimizer in $\mathcal F(M2^{p} )$ and presume we know that actually $f \in \mathcal F(M)$. The arguments used in \cite{Bibaut2019-zf} show that $\bar f_n$ converges in divergence in $\mathcal F(M2^{p} )$ at the same rate as $\tilde f_n$ converges in $\mathcal F(M)$. The only difference is the value of $M$, which enters the rate as a constant factor.

Since we are allowed rectangular bases we can now use a tree boosting algorithm to add them in sequentially. Each tree that is added to the ensemble contributes some number of rectangular bases corresponding to its leaves. If we force this boosting algorithm to continue adding trees until every rectangle in $H_n$ has been included and then finally perform a lasso regression over the leaves, then the result is the above estimator $\bar f_n$ which we have proven is fast-converging. In order to find all rectangles in $H_n$ the tree depth $D$ must be set to $p$ at a minimum.

By identical arguments we may now progress to $p$-depth regression trees $h(x) = \sum^{2^p} v \bar h(x)$ as long as the leaf values $v$ of all trees are bounded by some constant $V$. Fast convergence is now guaranteed with respect to $\mathcal F(VM(2^p)^2)$. The only issue is that the set of all regression trees constructed out of the rectangles in $\bar H_n$ is not finite due to the infinite number of possible leaf values. Therefore to construct a finite basis with the desired properties we simply augment $\bar H_n$ with a large, finite number of bounded regression trees constructed from $\bar H_n$. It does not matter which (they may be chosen data-adaptively) or exactly how many. Call the resulting basis $H_n$.

\subsection{Early-stopping Validation}
\label{sec:early-stopping}

We have shown that lassoed boosting converges quickly if we let it exhaust all possible bases because the result at the end is effectively the HAL estimator. 

Obviously we would like to be able to stop the boosting algorithm well before we have included all of the possible bases and we would like to fit trees of depth much less than $p$. We propose to do this with early-stopping validation. Generally, the point of early-stopping validation is to fit a sequence of nested models and to stop once error on a validation set begins to increase. This strategy is commonly employed as a model selection heuristic that prevents having to fit larger and larger models ad infinitum \cite{Luo2016-cb, Zhang2005-xh}. For example, several popular packages for gradient boosting allow the user to fit models without specifying the number of trees a-priori: given a validation dataset, the software continues adding trees until validation loss begins to increase. More generally this can be done in an ad-hoc fashion: a user might manually expand the upper/lower bound value of a hyperparameter during grid search model selection if it looks like a minimum has not been reached. Often these hyperparameters directly correspond to moving upwards in a series of nested statistical models (e.g. increasing number or depth of trees in tree ensembles, loosening regularization in elastic net).

Our concern here is to prove that early-stopping validation does not sacrifice the convergence rate we would obtain by letting our lassoed boosting algorithm run for the full number of possible iterations at depth $D=p$. The key is to cast the lassoed boosting algorithm as a sequence of empirical loss minimizations in larger and larger function classes (adding bases, increasing depth). 

Let $h_{k,n}$ be the chosen basis function in the $k$th boosting iteration and let $H_{k,n} = [h_{1,n} \dots h_{k,n}]$ be the set of bases after $k$ iterations. Our estimator at that point in the algorithm is the lasso regression of $\mathbf{Y}$ onto $H_{k,n}(\mathbf X)$:

\begin{align}
    f_{k,n} &= \argmin_{f \in\mathcal  F_{k,n}(M)} P_nLf \\ 
    \mathcal  F_{k,n}(M) &= 
    \left\{
    \begin{array}{l}
        \beta_0 + H_{k,n}(x) \beta \\
        \text{s.t.}\  \|\beta\|_1 \le M
    \end{array}
    \right\}
\end{align}

Note that the class $\mathcal F_{k,n}(M)$ itself depends on the training data, but nonetheless as we add bases we know $\mathcal F_{k,n}(M) \subset \mathcal F_{k+1,n}(M)$. Our estimator $f_{k,n}$ at each iteration is the empirical loss minimizer in the corresponding class $\mathcal F_{k,n}$.

Our main analytical result is that early-stopping validation of an estimator that minimizes empirical loss in a nested sequence of models embedded in a Donsker class preserves the convergence rate of the empirical minimizer in the last model relative to the true minimizer in the parent Donsker class. The proof is given in the appendix and relies only on generic results from empirical process theory. 

\begin{restatable}[Early-stopping validation is rate-preserving]{theorem}{THMstopping}
Consider a nested, possibly random or data-dependent, sequence of function classes 
$$ (\mathcal F_1 \subset \mathcal F_2 \subset \dots \mathcal F_k \subset \dots \mathcal F_{K_n} )_n \subset \mathcal F
$$ 
all contained in some Donsker class $\mathcal F$ and all containing the set of constant functions. Let
\begin{itemize}
    \item $f_{k,n} = \argmin_{f \in \mathcal F_{k,n}} P_n L f$ be an empirical minimizer of $L$ in each model
    \item $f = \argmin_{\phi \in \mathcal F} P L \phi$ be the true, unknown minimizer in the parent Donsker class. 
\end{itemize}

Define the difference in validation-set error from one iteration to the next to be $\Delta_{k,n} = \tilde P_n Lf_{k+1,n} - \tilde P_n Lf_{k,n}$ and choose $k^*_n$ to be the smallest value $k$ such that $\Delta_{k,n} > \epsilon_n$ for some positive $\epsilon_n$ that is allowed to shrink slower than $O(n^{-1/2})$. 

If $P(Lf_{K_n,n} - Lf) = o_P(r_n)$ for some $r_n \to 0$ then $P(Lf_{k^*_n,n} - Lf) = o_P(r_n)$.
\label{thm:early-stopping}
\end{restatable}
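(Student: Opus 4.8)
The plan is to show that, asymptotically, the early-stopping rule essentially never triggers before the end of the path, so that $f_{k^*_n,n}$ coincides with $f_{K_n,n}$ with probability tending to one and the conclusion follows immediately from the hypothesis. The entire argument rests on a single observation: because every $f_{k,n}$ lives in the one ambient Donsker class $\mathcal F$, the generalization gap is uniformly $O_P(n^{-1/2})$ across the whole (random, growing) path, while the \emph{training} loss $P_n L f_{k,n}$ is monotonically nonincreasing in $k$. These two facts together pin each validation increment $\Delta_{k,n}$ below the threshold $\epsilon_n$.

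First I would record the two empirical-process facts. Since $\mathcal F$ is Donsker and $L$ is a well-behaved loss, the loss class $\{Lf : f \in \mathcal F\}$ is also Donsker, so both $\sup_{f\in\mathcal F}|(P_n - P)Lf|$ and $\sup_{f\in\mathcal F}|(\tilde P_n - P)Lf|$ are $O_P(n^{-1/2})$, and hence $\sup_{f\in\mathcal F}|(\tilde P_n - P_n)Lf| = O_P(n^{-1/2})$. This bound is uniform over all of $\mathcal F$, and therefore over every $f_{k,n}$ regardless of $k$ or of the order in which bases were added, which is exactly the source of the promised uniformity over ordering. Next I would invoke nestedness: because $\mathcal F_{k,n}\subset\mathcal F_{k+1,n}$ and $f_{k,n}$ minimizes the training loss over $\mathcal F_{k,n}$, the training optima satisfy $P_n L f_{k+1,n}\le P_n L f_{k,n}$.

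Then I would combine these by writing $\Delta_{k,n} = (P_n L f_{k+1,n} - P_n L f_{k,n}) + (\tilde P_n - P_n)(Lf_{k+1,n} - Lf_{k,n})$. The first bracket is $\le 0$ by monotonicity, and the second is bounded in absolute value by $2\sup_{f\in\mathcal F}|(\tilde P_n - P_n)Lf|$, so $\sup_{k} \Delta_{k,n}\le O_P(n^{-1/2})$. Because $\epsilon_n$ is assumed to shrink strictly slower than $n^{-1/2}$, i.e. $n^{1/2}\epsilon_n\to\infty$, the event $A_n = \{\sup_k\Delta_{k,n}\le\epsilon_n\}$ has probability tending to one. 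On $A_n$ the criterion $\Delta_{k,n}>\epsilon_n$ is never satisfied, so the rule runs to the end and $f_{k^*_n,n} = f_{K_n,n}$.

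Finally I would transfer the rate. For any $\delta>0$, splitting on $A_n$ and using $f_{k^*_n,n}=f_{K_n,n}$ there gives $P\{r_n^{-1}P(Lf_{k^*_n,n}-Lf)>\delta\}\le P\{r_n^{-1}P(Lf_{K_n,n}-Lf)>\delta\} + P(A_n^c)$, and both terms vanish by the hypothesis $P(Lf_{K_n,n}-Lf)=o_P(r_n)$ and by $P(A_n)\to 1$, yielding $P(Lf_{k^*_n,n}-Lf)=o_P(r_n)$. The one genuinely delicate point, and the step I expect to demand the most care, is the \emph{uniform} $O_P(n^{-1/2})$ control of the increments: the bound must hold simultaneously over all $k\le K_n$ even though $K_n$ and the classes $\mathcal F_{k,n}$ themselves grow with $n$. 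This is precisely what the ambient Donsker property secures, and it would fail if the path were ever allowed to leave $\mathcal F$. Along the way I would also confirm that the minimizers $f_{k,n}$ are attained and that the loss class inherits the Donsker property, both of which hold under the mild, standard regularity conditions assumed throughout.
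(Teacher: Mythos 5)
Your proposal is correct, and while it shares the paper's overall strategy---show that with probability tending to one the stopping rule never fires, so the early-stopped estimator inherits the rate of $f_{K_n,n}$---the mechanics of the key step are genuinely different. The paper controls $\Delta_{k,n}$ by centering at the population measure, $\Delta_{k,n} = (\tilde P_n - P)(Lf_{k+1,n} - Lf_{k,n}) + P(Lf_{k+1,n} - Lf_{k,n})$, and then bounds the second term via a chain through the \emph{population} minimizers $f_k = \argmin_{\mathcal F_k} PLf$ (its Corollaries \ref{thm:unif-conv-min} and \ref{thm:nested-cnv}, using $P(Lf_{k+1} - Lf_k) \le 0$). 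You instead center at the training measure, $\Delta_{k,n} = P_n(Lf_{k+1,n} - Lf_{k,n}) + (\tilde P_n - P_n)(Lf_{k+1,n} - Lf_{k,n})$, and dispatch the first bracket by the elementary observation that exact empirical minimization over nested classes makes $P_n L f_{k,n}$ nonincreasing in $k$. This buys you three things: you never need the population minimizers $f_k$ to exist or be defined; your bound $\sup_k \Delta_{k,n} \le 2\sup_{f \in \mathcal F}|(\tilde P_n - P_n)Lf| = O_P(n^{-1/2})$ is manifestly uniform in $k$, whereas the paper proves the bound along arbitrary sequences $k_n$ and passes to the max somewhat tersely (one must apply its lemmas to the random argmax sequence); and your event-splitting transfer on $A_n = \{\sup_k \Delta_{k,n} \le \epsilon_n\}$ makes the paper's Condition (1) (that $\max_{k<K_n}|P(Lf_{k,n} - Lf_n)| = O_P(1)$) and its Lemma \ref{thm:prob-conv} bookkeeping unnecessary, since on $A_n$ you have the exact identity $f_{k^*_n,n} = f_{K_n,n}$ rather than a product of an indicator and an $O_P(1)$ quantity. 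Both arguments share the same implicit assumptions, which you correctly flag: the loss class $\{Lf : f \in \mathcal F\}$ must inherit the Donsker property, and the validation empirical process is treated at scale $n^{-1/2}$, i.e.\ $\tilde n \asymp n$. In short: a correct proof, slightly more elementary and self-contained than the paper's, at no loss of generality.
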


The way the stopping point $k^*$ is chosen means that the analyst never needs to fit past $f_{k^*+1,n}$. In contrast, consider the standard (cross-)validation scheme where $k^*$ is chosen as the global minimizer of validation loss $\tilde P_n L f_{k,n}$ over all $k$ in some finite set. Existing results show that this scheme preserves the convergence rate of the fastest estimator in the ensemble, even if it grows at slower than a given rate in $n$ \cite{Dudoit2005-fi}. The downside is that computing the global minimum means fitting $f_{k,n}$ for all $k$.
With early-stopping validation, we are instead satisfied with the first \textit{local} minimum we find as we increase $k$. This means we don't have to fit any models past that point, saving computation and allowing us to select among massive numbers of models. This has been heuristically justified in the past by arguing that larger models will continue to overfit more and more, so we are often right in guessing that the first local minimum is also the global minimum. 
That is not always true in finite samples, but our result establishes that is true in an asymptotic sense for nested sequences of empirical minimizers in Donsker classes. 

Statistically, the intuition for this result is that the difference in validation error from one step to the next $\Delta_{k,n}$ is an empirical mean with zero or negative expectation until reaching the true minimizer in $k$. That means that all we have to do to avoid stopping too early is have enough validation samples so that any positive random variation is smaller than $\epsilon_n$ with probability tending to one. On the other hand as we increase the number of training samples we expect the validation error curve to flatten out (i.e. it becomes harder and harder to overfit). Since $\epsilon_n$ is a positive number, the chance of stopping at any $k$ also goes to zero with enough validation data since the random variation will become unlikely to \textit{exceed} $\epsilon_n$. Figure \ref{fig:val-loss} shows a conceptual illustration.

\begin{figure}[ht]
    \centering
    \includegraphics[width=4in]{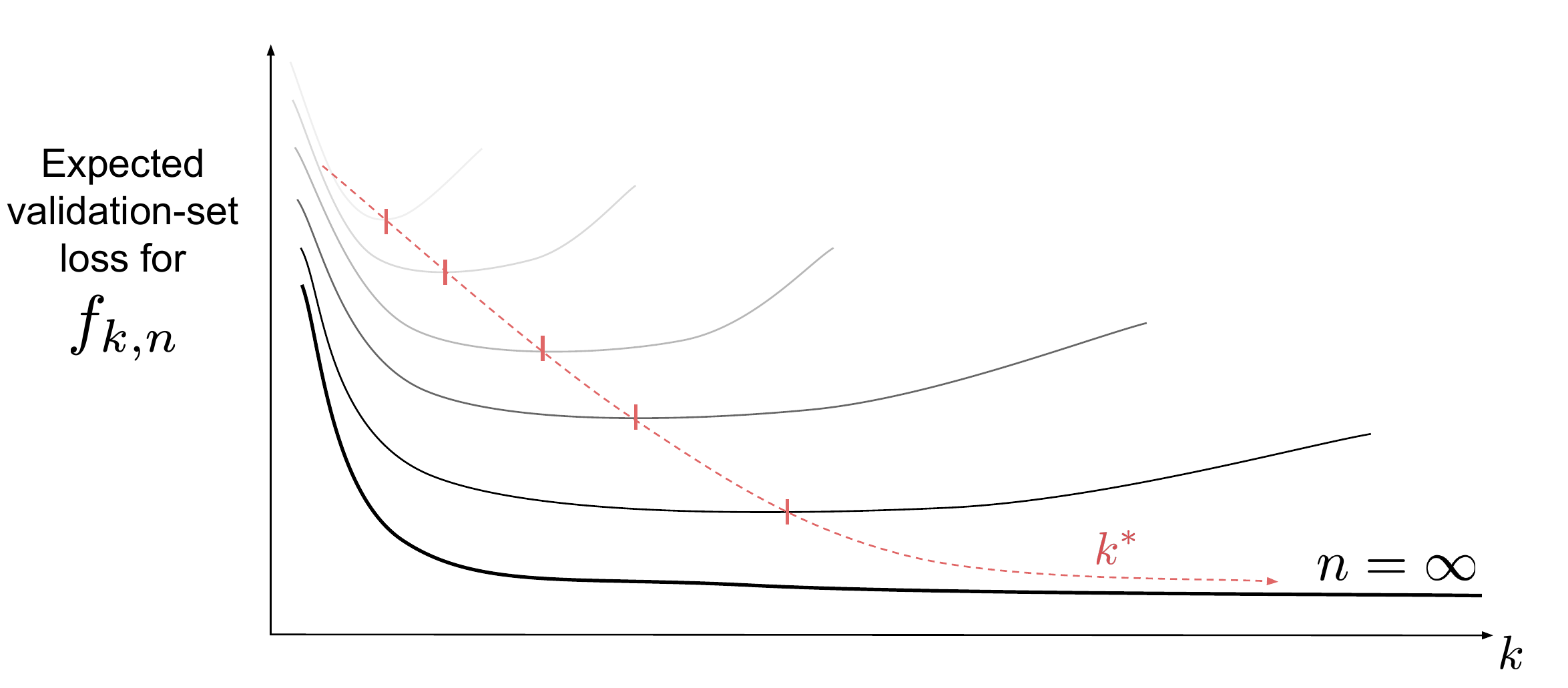}
    \caption{As the sample size $n$ increases, it becomes harder to overfit and the expected validation curve flattens out. With increasing $n$ the variation around this mean also decreases at a root-$n$ rate (not depicted).}
    \label{fig:val-loss}
\end{figure}


Theorem \ref{thm:early-stopping} easily extends to a early-stopping $V$-fold \textit{cross-}validation scheme where $k^*$ is chosen based on the first local minimum of the validation error averaged over the folds $\frac{1}{V}\sum_v \tilde P_n^{(v)} L f_{k,n}^{(-v)}$. In this setting, the final estimator will typically be re-fit for $k^*$ iterations using the full data and in general $\mathcal F_{k^*,n}^{(v_1)} \ne \mathcal F_{k^*,n}^{(v_2)} \ne \mathcal F_{k^*,n}$ for any two validation sets $v_1, v_2$. In other words, the exact sequence of bases in each validation sample will be different. This may seem unsettling, but the fact that our result uniformly covers cases where the sequence of statistical models is chosen adaptively or randomly means that we can select the number of iterations $k^*$ without regard to the precise models and still obtain our conclusion.

The uniformity over all data-adaptive nested sequences of models is what justifies many of our practical choices for lassoed boosting. There is effectively no restriction on the order in which we must add in the bases. This justifies our choice to first generate a large number of candidate bases using vanilla boosting with early stopping over both tree depth and number of trees and only then beginning to regularize all the leaf values with the lasso. This saves a huge amount of computation that would otherwise be spent solving larger and larger lasso problems. Practically we also never have to increase the tree depth past what was selected by the boosting early stopping: as long as we promise to add these deeper trees after all of the more shallow trees have been expended the theorem holds. In practice, validation error will begin to increase far before this happens.

It is also important to point out that $\epsilon_n$ may be taken as a fixed constant for all $n$ (e.g. this can be baked into the algorithm as a default setting). But having $\epsilon_n$ be allowed to shrink with the sample size is extremely beneficial: if we were to use a fixed cutoff $\epsilon_n = \epsilon$ and apply our early-stopping algorithm to larger and larger samples, we would expect the discovered stopping point $k^*$ to increase because larger samples combat variance (overfitting). However, we don't want $k^*$ to get too large in practice because that increases computational intensity. By allowing $\epsilon_n$ to shrink with sample size, we can be more and more sensitive to increases in validation loss as $n$ increases. That makes us more aggressive about stopping early in large samples, thus partially slowing down the rate at which $k^*$ grows while still preserving the asymptotics. We expect users may choose $\epsilon$ to facilitate computation, meaning that in different problems they may use different values to keep runtimes reasonable. In other words, users will naturally choose shrinking $\epsilon_n$, even if they do not realize it. Allowing $\epsilon_n$ to shrink (but not too fast) helps protect the statistical properties from this aggressive user behavior.

Our theory also trivially covers any early stopping schemes that are less aggressive than checking for the first time that $\Delta_{k,n}>\epsilon_n$. For example, to try to get over local minima and improve empirical finite-sample performance we might stop only after $\Delta_{k,n}>\epsilon_n$ some number of iterations in a row. Schemes of this kind are common in machine learning applications that use early stopping.

\subsection{Lasso Tuning}
Until now we have presumed that all lasso problems are solved using a fixed, known L1 bound $M$. In practice this is not possible because all common lasso implementations solve the Lagrangian lasso problem where the L1 bound is reformulated as a penalty $\lambda$. The relationship between $M$ and $\lambda$ that share a solution is monotonic, but otherwise depends on the dataset. Since we are changing the bases from iteration to iteration, it does not suffice to set $\lambda$ at a fixed value. Moreover this would be bad for performance.

Our first concern is to ensure that the lasso bound does not decrease from iteration to iteration. Our early-stopping theory only holds over a sequence of increasing models. If the lasso bound shrinks while the number of bases increases then the relationship between the function classes searched over from one iteration to the next is indeterminate. 

One approach is to compute a full lasso path at each iteration and take the best solution (in validation error) that has L1 norm larger than the L1 norm of the solution from the previous iteration (but smaller than some large predefined upper bound). In practice this begs the question of how to set the first L1 norm bound. If set by validation error, we found in initial testing that the L1 norm increases too quickly, requiring smaller and smaller $\lambda$ and leading to convergence issues. It is also counterintuitive: as we add bases, we would like to \textit{increase} rather than \textit{decrease} regularization to counteract overfitting.

Our approach instead exploits the uniformity of our early stopping result. At each iteration we find $\lambda$ that gives minimum validation error, subject again to a fixed global upper bound on L1 norm. We then look back over all lasso solutions (full regularization paths, not just optima) that have a smaller L1 norm from all previous iterations. If any of these attained a smaller validation error we stop the algorithm. This is justified because we have found an increase in validation error over \textit{some} increasing sequence of models. By fitting full regularization paths at each iteration we can effectively search over multiple nested sequences in a partial order over a large set of models, prioritizing small validation error.

\section{Demonstration}
\label{sec:demo}

Our goal in this section is to empirically demonstrate that lassoed tree boosting (LTB) has a fast convergence rate, performs well in real-data settings, and runs quickly enough for practical use. 

In all experiments we use simple training-validation-test splits, using the same, fixed validation data to do all tuning and simply returning the fit from the training data without refitting with the validation data before computing test-set error. This is to save computation since one of our benchmarks (HAL) is very slow. All datasets are split into training-validation (90\%) and test sets (10\%) and then the training-validation data are randomly split into training (80\%) and validation (20\%).

We use the xgboost implementation of GBT with learning rate set to $0.05$ and other parameters left to defaults. For a fixed depth, we tune the number of trees with early stopping validation, requiring 3 increases in validation error to stop. We then tune depth via early stopping in an outer loop, starting with depth 1 and increasing by 1 in each iteration, stopping at the first increase in error. 

Our LTB implementation starts with the tuned GBT fit described above. We then compute the lasso path over all tree predictions using the celer lasso solver with default parameters. We add 10 trees to the boosting model and then repeat the process, stopping according to the scheme described above.

In our experiments we compare to tuned GBT, lasso on the original covariates (optimizing $\lambda$ with validation error), and a pure HAL implementation in the spirit of van der Laan \cite{hal}. For fair comparison we use the same lasso solver for HAL and LTB.

\subsection{Rates in Simulation}
A simple simulation provides some evidence that our theoretical rate holds up in practice. We simulated 100 repetitions of datasets of increasing sizes ($n$) from the following data-generating process:

\begin{align*}
X_1 &\sim \text{Unif}(-4,4)\\
X_2 &\sim \text{Bern}(0.5)\\
Y &= -0.5X_1+\frac{X_2X_1^2}{2.75}+X_2 + \mathcal N\left(0,1 \right)
\end{align*}

We applied each estimator (HAL, LTB, and GBT in this case) to each simulated dataset to obtain estimates $f_n$ for each. We then calculated an approximate L2 error $\tilde P_n (f-f_n)^2$ using a large held-out dataset of 100,000 observations from the same data-generating process. The results are shown in Figure \ref{fig:convergence}

\begin{figure}[ht]
    \centering
    \includegraphics[width=5in]{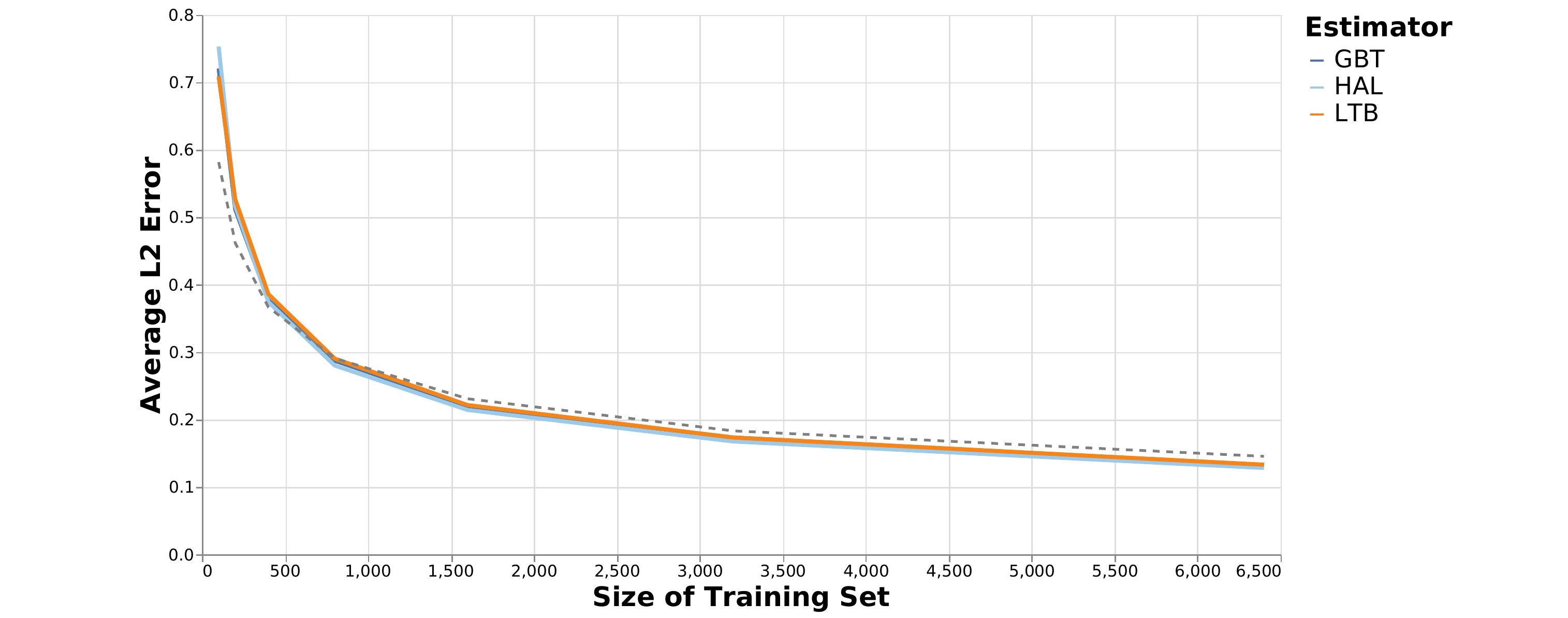}
    \caption{Empirical convergence rates of GBT, HAL, and LTB in both data-generating processes. The theoretical $n^{-1/3}$ rate is also shown.}
    \label{fig:convergence}
\end{figure}

The results confirm that LTB attains the same empirical rate as HAL, up to a constant factor. GBT by itself also appears to attain this rate. All three are approximately equal to the theoretical rate up to constant factors.

\subsection{Performance and Timing on Real Data}

Here we demonstrate how LTB performs relative to GBT, HAL, and lasso on 12 regression datasets from the UCI Machine Learning Repository \cite{Dua2017-yp}. It is not computationally feasible to run HAL on most of these datasets--we omit results for HAL on datasets where HAL had not fit after more than an hour of running. All results are averages across 10 runs except for the slice and yearmsd datasets which we only ran once.

Our results are shown in table \ref{tbl:results}. LTB and GBT perform almost identically and in all cases are much better than lasso and on par with HAL where the comparison is possible. 
LTB incurs some computational cost relative to GBT but pales in comparison to the computational cost of HAL for the larger datasets.
Interesting, the added compute time for LTB does not always scale with the size of the data. This is because in larger datasets the GBT early stopping algorithm tends to fit deeper trees (and therefore fewer of them) and this reduces the dimension of the lasso problems. 

\begin{table}
\centering
\begin{tabular}{lrr|llll}
\toprule
    data &   p &      n &            LTB &            GBT &           HAL &         LASSO \\
\midrule
   yacht &   6 &    308 &   0.90 (10.69) &    0.90 (4.68) &   0.72 (0.92) &   8.92 (0.01) \\
  energy &   8 &    768 &   0.40 (30.82) &   0.40 (21.46) &  0.43 (45.80) &   4.14 (0.01) \\
  boston &  13 &    506 &    3.35 (5.92) &    3.43 (4.47) & 3.66 (916.61) &   5.02 (0.01) \\
concrete &   8 &   1030 &   4.70 (43.29) &   4.87 (37.15) & 4.02 (134.01) &  10.40 (0.01) \\
    wine &  11 &   1599 &    0.64 (6.01) &    0.63 (3.58) &           -- &   0.67 (0.03) \\
   power &   4 &   9568 &   3.41 (56.92) &   3.46 (28.88) &           -- &   4.59 (0.04) \\
  kin8nm &   8 &   8192 &   0.12 (96.50) &   0.12 (60.40) &           -- &   0.21 (0.03) \\
   naval &  17 &  11934 &  0.00 (107.98) &   0.00 (56.19) &           -- &  0.01 (10.51) \\
 protein &   9 &  45730 &  1.94 (611.38) &   1.94 (96.80) &           -- &   2.50 (0.33) \\
    blog & 280 &  52397 & 23.49 (185.49) &   23.46 (9.90) &           -- & 28.25 (13.51) \\
   slice & 384 &  53500 & 1.23 (3350.61) & 1.24 (3067.95) &           -- & 8.33 (121.71) \\
 yearmsd &  90 & 515345 & 8.54 (4616.82) & 8.54 (1543.05) &           -- &  9.49 (40.09) \\
\bottomrule
\end{tabular}
\caption{Performance on real datasets, ordered by $p \times n$. Entries are mean RMSE and timing (in seconds) across repetitions for all experiments.}
\label{tbl:results}
\end{table}

\section{Discussion}
\label{sec:discussion}

We have demonstrated a regression algorithm, lassoed tree boosting, which is computationally scalable while maintaining a $O_P(n^{-1/3}(\log n)^{2(p-1)/3})$ L2 convergence rate in a nonparametric setting. The method is easy to implement with off-the-shelf software and runs as quickly as solving a moderately large lasso problem and fitting a boosting model. We showed that our algorithm retains the performance of standard gradient boosting on a diverse group of real-world datasets. Along the way, we proved a general result pertaining to empirical loss minimization in nested Donsker classes. This may be of independent interest for the development of future statistically-justified machine learning methods. 

Our empirical results also show that GBT's early-stopping performance is extremely similar to LTB's without any special modification.
This begs the question of why we should pay the computational cost of the lasso step in practice. 
We have no reply other than to say that LTB has a rate guarantee and GBT does not. 
Therefore in cases where such a guarantee is required (e.g. regressions for efficient estimators in causal inference settings) LTB provides peace of mind for a relatively minor cost.

However, the results suggest that boosting by itself may converge quickly in the space of bounded sectional variation cadlag functions. 
Our theoretical results may be a step towards showing this rigorously. 
Existing results show that (in special cases) the added ``complexity'' of a single boosting iteration is exponential \cite{buhlmann2003boosting}: it may be that standard boosting implicitly restricts the variation norm in some way even without the lasso step. There may also be connections to recent literature that shows how discrete gradient descent creates implicit regularization in overparametrized models \cite{barrett2020implicit, gidel2019implicit}. 
Having a general result that explicitly covers standard implementations of gradient boosting would obviate the need for the additional complexity introduced by lassoed boosting.

Our results also shed some light on why boosting is so successful by itself. The standard approach of regularizing via a small learning rate $\eta$ mimics L1 shrinkage as $\eta \to
0$, assuming a globally optimal tree can be fit in each iteration \cite{Efron2004-jv, Zhao_undated-fg, Luo2016-cb}. Our early-stopping result makes it clear that, at least asymptotically, trees can be added in a suboptimal order without compromising the rate. 

Our early stopping result is extremely general and in theory covers any data-adaptive method of basis selection. For example, using the HAL bases suggested by van der Laan \cite{hal} we might first include all the "main terms" bases $1(c_j \le x_j)$, then add in the two-way interactions $1(c_j \le x_j, c_l \le x_l)$, etc. and stop early based on validation error. This is justified, but not necessarily practical because the number of bases still increases too quickly. The intuitive reason that boosting with trees works so well is that it already approximates the lasso path over the set of all regression trees and thus provides the ``right'' ordering of trees, i.e. it prioritizes trees that should end up having nonzero coefficients in the full model.

Although lassoed boosting is inspired by HAL, the two algorithms are not fungible. Recent work has shown how HAL can be used as a plugin estimator for pathwise differentiable estimands \cite{Van_der_Laan2019-bt} and that HAL is pointwise asymptotically normal and uniformly consistent as a regression estimator \cite{vanderlaan2023higher}. These results are due to the fact that HAL asymptotically solves a large huge number of score equations (one per nonzero basis in the lasso). In theory, lassoed boosting does this as well, but part of the way lassoed boosting eases the computational burden is precisely by reducing the number of bases. Thus HAL retains important theoretical benefits. Investigating how to export these benefits while improving the computational burden or whether it is possible will be the subject of future work.

\subsection*{Acknowledgments}

{\small
The authors are eternally grateful to Tianyue Zhou for notational corrections throughout the manuscript and to several anonymous referees for important structural improvements.
}

\bibliographystyle{plain}
\bibliography{references}

\appendix

\section{Lemmas}

First we remind ourselves of an elementary asymptotic result:
\begin{lemma}
Let $U_n \in \{0,1\}$ and assume $P\{U_n=1\} \to 0$. Then $a_n U_n \overset{P}{\to} 0$ for any $a_n$.
\label{thm:prob-conv}
\end{lemma}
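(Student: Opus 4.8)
The plan is to argue directly from the definition of convergence in probability, exploiting the fact that $U_n$ is an indicator. The key observation is that the product $a_n U_n$ vanishes identically on the event $\{U_n = 0\}$, so for any threshold $\delta > 0$ the "large deviation" event $\{|a_n U_n| > \delta\}$ can only occur when $U_n = 1$. This means the magnitude of $a_n$, however fast it grows, is completely absorbed by the indicator structure.

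First I would fix an arbitrary $\delta > 0$. Whenever $U_n = 0$ we have $a_n U_n = 0$, so $|a_n U_n| = 0$ and the strict inequality $|a_n U_n| > \delta$ fails. Hence the occurrence of $\{|a_n U_n| > \delta\}$ forces $U_n = 1$, giving the set inclusion
\begin{equation*}
\{|a_n U_n| > \delta\} \subseteq \{U_n = 1\},
\end{equation*}
which holds for every deterministic sequence $a_n$, regardless of its size. Applying monotonicity of the probability measure and the hypothesis $P\{U_n = 1\} \to 0$ then yields
\begin{equation*}
P\{|a_n U_n| > \delta\} \le P\{U_n = 1\} \to 0.
\end{equation*}
Since $\delta > 0$ was arbitrary, this is exactly the statement $a_n U_n \overset{P}{\to} 0$.

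There is essentially no serious obstacle here; the result is elementary. The only point requiring any care is recognizing that the inclusion above is valid uniformly over all $a_n$ precisely because $U_n$ takes only the values $0$ and $1$: on the event where the product is nonzero we have $U_n = 1$, and on its complement the product is exactly zero, so the size of $a_n$ never enters the probability bound. This is also exactly why the lemma is useful in the main argument, where one multiplies a possibly diverging normalizing rate $a_n$ against a stopping indicator whose probability tends to zero and still concludes that the product is asymptotically negligible.
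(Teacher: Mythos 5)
Your proof is correct and is essentially the paper's argument: both rest on the observation that $\{|a_n U_n| > \delta\}$ can only occur when $U_n = 1$, so the probability is bounded by $P\{U_n = 1\} \to 0$. The only cosmetic difference is that you express this as a set inclusion while the paper writes it via the law of total probability conditioning on $U_n$; your version is marginally cleaner since it avoids conditioning on an event whose probability may be zero.
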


\begin{proof}
By definition of convergence in probability we must show that $P\{a_nU_n > \epsilon\} \to 0$ for any fixed $\epsilon > 0$. But this is clear because 
\begin{align*}
P\{a_nU_n > \epsilon\} 
&= 
\underbrace{P\{a_nU_n > \epsilon | U_n=0\}}_{0} P\{U_n = 0\} + 
P\{a_nU_n > \epsilon | U_n=1\} P\{U_n = 1\} \\
&\le P\{U_n = 1\} \to 0
\end{align*}
\end{proof}

We also sketch the proofs for some generic results from empirical process theory that we will use several times. We will use the following definitions for subsequent results:

\begin{itemize}
    \item $\{\mathcal F_k\} \subseteq \mathcal F$: a sequence of Donsker classes all of which are contained in a parent Donsker class $\mathcal F$.
    \item $\mathcal M = \{Lf:f \in \mathcal F\}$: also a Donsker class.
    \item $f_{k} = \argmin_{f \in \mathcal{F}_{k}} PLf$: a sequence of loss minimizers in $\mathcal F_{k}$ (define $f$ equivalently w.r.t. $\mathcal F$)
    \item $f_{k, n} = \argmin_{f \in \mathcal{F}_{k}} P_nLf$: a sequence of empirical loss minimizers in $\mathcal F_{k}$ (define $f_n$ equivalently w.r.t. $\mathcal F$)
\end{itemize}

\begin{lemma}
If $\mathcal M$ is Donsker, $(P_n-P)m_n = O_P(n^{-1/2})$ for any (possibly random) sequence $m_n \in \mathcal M$. 
\label{thm:unif-cnv}
\end{lemma}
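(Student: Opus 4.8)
The plan is to invoke the defining property of a Donsker class directly and then absorb the random index $m_n$ into a uniform supremum bound. Write $\mathbb{G}_n = \sqrt{n}(P_n - P)$ for the empirical process indexed by $\mathcal M$. By definition, $\mathcal M$ being $P$-Donsker means exactly that $\mathbb{G}_n \rightsquigarrow \mathbb{G}$ in $\ell^\infty(\mathcal M)$, where $\mathbb{G}$ is a tight, Borel-measurable Gaussian process whose sample paths are almost surely bounded on $\mathcal M$. This is the only hard input; the rest is bookkeeping.

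First I would convert this weak convergence into boundedness in probability of the supremum. The functional $z \mapsto \|z\|_{\mathcal M} := \sup_{m \in \mathcal M}|z(m)|$ is continuous from $\ell^\infty(\mathcal M)$ to $\mathbb R$, so the continuous mapping theorem gives $\|\mathbb{G}_n\|_{\mathcal M} \rightsquigarrow \|\mathbb{G}\|_{\mathcal M}$. Because $\mathbb{G}$ is tight its paths are a.s.\ bounded, so $\|\mathbb{G}\|_{\mathcal M}$ is a finite random variable, and convergence in distribution to a finite limit yields $\|\mathbb{G}_n\|_{\mathcal M} = O_P(1)$. (Equivalently, one can read this straight off the asymptotic tightness that weak convergence in $\ell^\infty(\mathcal M)$ entails, since compact subsets of $\ell^\infty(\mathcal M)$ are norm-bounded.)

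The final step handles the possibly data-dependent sequence $m_n$ by noting that the uniform bound does not care how the index is chosen. Since $m_n \in \mathcal M$ for every $n$, we have the pointwise-in-$\omega$ domination
\[
\left|\sqrt{n}(P_n - P)m_n\right| = |\mathbb{G}_n m_n| \le \sup_{m \in \mathcal M}|\mathbb{G}_n m| = \|\mathbb{G}_n\|_{\mathcal M},
\]
whose right-hand side is $O_P(1)$ by the previous step. Dividing through by $\sqrt{n}$ gives $(P_n - P)m_n = O_P(n^{-1/2})$, as claimed.

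The conceptual crux --- and the only place where care is genuinely needed --- is precisely this last domination. Because $m_n$ may depend arbitrarily on the same data used to form $P_n$, one cannot argue pointwise (e.g.\ via an ordinary CLT for a fixed $m$) and must instead rely on a bound that is uniform over the entire class. The Donsker hypothesis is exactly what upgrades pointwise control into the uniform supremum control that absorbs the randomness in $m_n$; everything else is a routine application of weak-convergence machinery, and I would suppress the standard outer-probability measurability caveats.
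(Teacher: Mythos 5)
Your proof is correct and follows essentially the same route as the paper: both arguments use the Donsker property to get $\sup_{m\in\mathcal M}\sqrt{n}\,|(P_n-P)m| = O_P(1)$ and then dominate the randomly indexed term by this supremum. The only cosmetic difference is that you justify the intermediate step via the continuous mapping theorem applied to the sup-norm functional, whereas the paper sketches it directly from tightness and a finite covering of the limit's high-probability compact set---both are standard and equivalent here.
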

\begin{proof}
By definition, since $\mathcal M$ is Donsker, $\sqrt{n}(P_n-P) \rightsquigarrow G$ where $G$ is a tight Gaussian process indexed by $\mathcal M$ with sample paths in $l^\infty(\mathcal M)$ (bounded functions from $\mathcal M$ to $\mathbb R$, equipped with supremum norm). Tightness means that with high probability the sample paths of $G$ are in a compact set and can thus be covered with a \textit{finite} number of balls. The center of each of these balls is a bounded function since all sample paths are bounded. Therefore all functions in this high-probability set are also bounded, and moreover are bounded uniformly since the number of balls is finite. 

That establishes the fact that $\sup_{m\in \mathcal M} \sqrt{n}(P_n-P)m = O_P(1)$, since for any level of probability $\delta$ we can eventually find a high-probability set where all the contained functions are bounded. From there we get that $(P_n-P)m_n \le \sup_{m\in \mathcal M} (P_n-P)l = O_P(n^{-1/2})$ as desired. 

\textbf{Note}: as a trivial corollary we also have that $(\tilde P_n-P)m_n \le \sup_{m\in \mathcal M} (\tilde P_n-P)m = O_P(n^{-1/2})$ 
\end{proof}

\begin{corollary}
$P(Lf_{k_n,n}-Lf_{k_n}) = O_P(n^{-1/2})$ for any sequence $k_n$.
\label{thm:unif-conv-min}
\end{corollary}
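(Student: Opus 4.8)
The plan is to sandwich the divergence $P(Lf_{k_n,n} - Lf_{k_n})$ between zero and an $O_P(n^{-1/2})$ quantity by playing the two optimality properties that define the two minimizers against each other. The crucial observation is that $f_{k_n,n}$ and $f_{k_n}$ both live in the same class $\mathcal F_{k_n} \subseteq \mathcal F$, so $Lf_{k_n,n}$ and $Lf_{k_n}$ are both (possibly random) elements of the Donsker class $\mathcal M$ --- which is precisely the setting Lemma \ref{thm:unif-cnv} is built to handle.

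For the lower bound I would invoke the population optimality of $f_{k_n}$: since $f_{k_n}$ minimizes $PLf$ over $\mathcal F_{k_n}$ and $f_{k_n,n} \in \mathcal F_{k_n}$, we get $PLf_{k_n} \le PLf_{k_n,n}$, i.e. $P(Lf_{k_n,n} - Lf_{k_n}) \ge 0$. For the upper bound I would add and subtract the empirical measure,
\[
P(Lf_{k_n,n} - Lf_{k_n}) = P_n(Lf_{k_n,n} - Lf_{k_n}) + (P - P_n)(Lf_{k_n,n} - Lf_{k_n}),
\]
and then use the empirical optimality of $f_{k_n,n}$ (it minimizes $P_n Lf$ over $\mathcal F_{k_n}$, with $f_{k_n} \in \mathcal F_{k_n}$) to make the first term nonpositive. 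The second term splits as $(P - P_n)Lf_{k_n,n} - (P - P_n)Lf_{k_n}$, and since each summand is a random element of $\mathcal M$, Lemma \ref{thm:unif-cnv} bounds each by $O_P(n^{-1/2})$, so their difference is $O_P(n^{-1/2})$ as well. Combining gives $0 \le P(Lf_{k_n,n} - Lf_{k_n}) \le O_P(n^{-1/2})$, which is the claim.

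I do not anticipate a genuine obstacle: essentially all the content is carried by Lemma \ref{thm:unif-cnv}, and the remaining steps are the two one-line optimality inequalities plus an add-and-subtract decomposition. The one subtlety worth flagging is that $k_n$ (and hence, in the downstream applications, $\mathcal F_{k_n}$) may depend on the data, so $Lf_{k_n,n}$ and $Lf_{k_n}$ are \emph{random} indices into $\mathcal M$. This is exactly why the uniform-over-$\mathcal M$ form of Lemma \ref{thm:unif-cnv}, rather than a fixed-function central limit theorem, is the right tool, and why the argument goes through uniformly in $k_n$ without imposing any condition on how the sequence is selected.
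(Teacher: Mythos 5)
Your proposal is correct and follows essentially the same route as the paper's proof: the same sandwich between zero (via population optimality of $f_{k_n}$) and the empirical-process term $-(P_n - P)(Lf_{k_n,n} - Lf_{k_n})$ (via empirical optimality of $f_{k_n,n}$), with Lemma \ref{thm:unif-cnv} supplying the $O_P(n^{-1/2})$ bound. The only cosmetic difference is that you apply the lemma to each summand $(P-P_n)Lf_{k_n,n}$ and $(P-P_n)Lf_{k_n}$ separately, while the paper applies it to the difference in one stroke; your remark about the randomness of the index $k_n$, and the resulting need for the uniform-over-$\mathcal M$ form of the lemma, is exactly the point the paper's construction is designed to handle.
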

\begin{proof}
\begin{align*}
    0 \le P(Lf_{k_n,n}-Lf_{k_n})
    &= 
    -(P_n-P)(Lf_{k_n,n}-Lf_{k_n}) + P_n(Lf_{k_n,n}-Lf_{k_n})
    \\
    &\le -(P_n-P)(Lf_{k_n,n}-Lf_{k_n}) 
\end{align*}
which is $O_P(n^{-1/2})$ by Lemma \ref{thm:unif-cnv}.
\end{proof}

\begin{corollary}
$P(Lf_{k_n, n}-Lf_{k'_n, n}) \le O_P(n^{-1/2})$ for any sequences $k'_n < k_n$.
\label{thm:nested-cnv}
\end{corollary}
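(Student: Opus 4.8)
The plan is to mirror the proof of Corollary \ref{thm:unif-conv-min} almost verbatim, replacing the role of the population minimizer $f_{k_n}$ there by the empirical minimizer $f_{k'_n,n}$ from the smaller model here. The one structural fact I would exploit is the nesting: since $k'_n < k_n$ we have $\mathcal F_{k'_n} \subseteq \mathcal F_{k_n}$, so $f_{k'_n,n}$ is a feasible point for the empirical minimization that defines $f_{k_n,n}$. Because $f_{k_n,n}$ is by definition the empirical loss minimizer over the \emph{larger} class, this immediately gives $P_n L f_{k_n,n} \le P_n L f_{k'_n,n}$, i.e. $P_n(Lf_{k_n,n} - Lf_{k'_n,n}) \le 0$. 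This is the analogue of the inequality $P_n(Lf_{k_n,n} - Lf_{k_n}) \le 0$ used in the previous corollary, but now both functions being compared are empirical minimizers.

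First I would add and subtract the empirical measure and drop the nonpositive term identified above:
\begin{align*}
P(Lf_{k_n,n} - Lf_{k'_n,n})
&= -(P_n - P)(Lf_{k_n,n} - Lf_{k'_n,n}) + P_n(Lf_{k_n,n} - Lf_{k'_n,n}) \\
&\le -(P_n - P)(Lf_{k_n,n} - Lf_{k'_n,n}).
\end{align*}
It then remains only to control the empirical process term on the right. I would split it into its two pieces and bound each separately with Lemma \ref{thm:unif-cnv}. Both $f_{k_n,n}$ and $f_{k'_n,n}$ lie in $\mathcal F$ for every $n$, so $Lf_{k_n,n}$ and $Lf_{k'_n,n}$ are each (possibly random, possibly data-dependent) sequences in the Donsker class $\mathcal M$. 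Hence $(P_n - P)Lf_{k_n,n} = O_P(n^{-1/2})$ and $(P_n - P)Lf_{k'_n,n} = O_P(n^{-1/2})$, and their difference is therefore $O_P(n^{-1/2})$, which yields the claim.

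The only subtlety worth flagging is that $Lf_{k_n,n} - Lf_{k'_n,n}$ is a \emph{difference} of elements of $\mathcal M$ and need not itself belong to $\mathcal M$, so one cannot legitimately apply Lemma \ref{thm:unif-cnv} to the difference as a single sequence. Splitting into the two summands and invoking the lemma twice sidesteps this cleanly, since each summand is a bona fide random sequence in $\mathcal M$. The uniformity built into Lemma \ref{thm:unif-cnv} is exactly what permits the arbitrary (indeed data-adaptive) index sequences $k_n$ and $k'_n$, so no additional hypothesis on how these indices are chosen is needed.
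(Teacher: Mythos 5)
Your proof is correct, but it takes a genuinely different route from the paper's. The paper pivots through the \emph{population} minimizers: it writes $P(Lf_{k_n,n} - Lf_{k'_n,n}) = P(Lf_{k_n,n} - Lf_{k_n}) + P(Lf_{k_n} - Lf_{k'_n}) + P(Lf_{k'_n} - Lf_{k'_n,n})$, drops the middle term (nonpositive, since nesting makes $f_{k_n}$ at least as good as $f_{k'_n}$ in population loss), and bounds the two outer excess-risk terms by two applications of Corollary \ref{thm:unif-conv-min}. You instead use nesting at the \emph{empirical} level: $f_{k'_n,n}$ is feasible for the larger empirical problem, so $P_n(Lf_{k_n,n} - Lf_{k'_n,n}) \le 0$, and a single add-and-subtract of $P_n$ reduces everything to the centered empirical-process term, which you control with Lemma \ref{thm:unif-cnv} applied twice. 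In effect you re-run the proof of Corollary \ref{thm:unif-conv-min} with $f_{k'_n,n}$ playing the role of $f_{k_n}$, rather than invoking that corollary as a black box. Your version is slightly more self-contained — it never needs the population minimizers $f_{k_n}, f_{k'_n}$ to exist or be compared — while the paper's version is more modular, reusing its earlier corollary. Your closing remark is also well taken: the difference $Lf_{k_n,n} - Lf_{k'_n,n}$ need not lie in $\mathcal M$, and your splitting into two bona fide sequences in $\mathcal M$ sidesteps this; note that the paper itself applies Lemma \ref{thm:unif-cnv} directly to such differences (in Corollary \ref{thm:unif-conv-min} and in the main theorem), implicitly relying on the fact that pairwise differences from a bounded Donsker class again form a Donsker class — your argument avoids even needing that preservation fact.
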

\begin{proof}
\begin{align*}
    P(Lf_{k_n, n} - Lf_{k'_n, n}) 
    &= P(Lf_{k_n, n} - Lf_{k_n}) 
    + P(Lf_{k_n} - Lf_{k'_n})
    + P(Lf_{k'_n} - Lf_{k'_n, n}) \\
    &\le 
    |P(Lf_{k_n,n} - Lf_{k_n})|
    + |P(Lf_{k'_n, n}-Lf_{k'_n})| \\
    &=  P(Lf_{k_n, n} - Lf_{k_n})
    + P(Lf_{k'_n, n}-Lf_{k'_n})
\end{align*}
and Corollary \ref{thm:unif-conv-min} and ensures these terms are $O_P(n^{-1/2})$. 
\end{proof}

\begin{corollary}
$P_nLf_{k,n} = O_P(1)$
\label{thm:bounded-loss}
\end{corollary}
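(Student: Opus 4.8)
The plan is to split the empirical loss at the minimizer into an empirical-process fluctuation and a population piece, and to bound each uniformly in $k$. Writing
$$
P_n L f_{k,n} = (P_n - P) L f_{k,n} + P L f_{k,n},
$$
the first term is handled immediately: since $f_{k,n} \in \mathcal F_k \subseteq \mathcal F$ we have $L f_{k,n} \in \mathcal M$, so Lemma \ref{thm:unif-cnv} gives $(P_n - P) L f_{k,n} = O_P(n^{-1/2}) = O_P(1)$, and this holds regardless of which (possibly random) index $k$ we evaluate at. Notice that this step does not use minimality at all, only membership of $L f_{k,n}$ in the Donsker class $\mathcal M$.

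The remaining work is to control the population term $P L f_{k,n}$, and here I would argue it is bounded by a nonrandom constant. The key fact is that a Donsker class is totally bounded, hence bounded, in $L_2(P)$ \cite{Van_der_Vaart2000-yx}; this is the same total boundedness that underlies the tightness argument in the proof of Lemma \ref{thm:unif-cnv}. Thus $\sup_{m \in \mathcal M}\|m\|_{L_2(P)} =: B < \infty$, and since $L f_{k,n}$ always lands in $\mathcal M$, Cauchy--Schwarz yields $|P L f_{k,n}| \le \|L f_{k,n}\|_{L_2(P)} \le B$ deterministically, for every $k$ and every sample. Combining the two pieces gives $|P_n L f_{k,n}| \le \sup_{m \in \mathcal M}|(P_n - P)m| + B = O_P(1)$, uniformly over $k$.

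An alternative and more elementary route exploits that $f_{k,n}$ is an empirical minimizer and that each $\mathcal F_k$ contains the constant functions. Fixing any constant $c_0$, minimality forces $P_n L f_{k,n} \le P_n L c_0$ for every $k$ simultaneously; the right-hand side is an average of i.i.d.\ terms with finite mean $P L c_0$, so by the weak law of large numbers $P_n L c_0 \overset{P}{\to} P L c_0$ and hence $P_n L f_{k,n} \le O_P(1)$ uniformly in $k$. When the loss is nonnegative (as for squared error) this one-sided bound, together with $P_n L f_{k,n} \ge 0$, already delivers the two-sided conclusion.

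I expect the only genuine subtlety to be obtaining a \emph{two-sided} bound on the population term. The minimizer/constant comparison controls $P_n L f_{k,n}$ from above only, so to conclude $O_P(1)$ without assuming the loss is bounded below I would lean on the first route, where $L_2(P)$-boundedness of $\mathcal M$ pins $P L f_{k,n}$ between two finite constants. Everything else is a direct application of Lemma \ref{thm:unif-cnv} and the defining properties of a Donsker class, so no further machinery is needed.
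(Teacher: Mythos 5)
Your first route rests on a false key fact: the Donsker property does \emph{not} imply that $\mathcal M$ is totally bounded, or even bounded, in $L_2(P)$. Donskerness yields total boundedness only with respect to the variance semimetric $\rho_P(m,m') = \sqrt{\operatorname{Var}_P(m-m')}$, which is blind to means. A quick counterexample: the class of all constant functions is trivially Donsker, since $(P_n - P)c = 0$ identically, yet $\sup_{m}\|m\|_{L_2(P)} = \infty$ over that class. This is also why the tightness argument inside Lemma \ref{thm:unif-cnv} controls only the centered quantity $(P_n-P)m_n$ and says nothing about $Pm_n$ itself. So your claimed deterministic bound $|PLf_{k,n}| \le B$ does not follow from the stated hypotheses, and the primary route collapses precisely at the step you yourself flagged as the ``only genuine subtlety.''

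The fix is elementary, and your second route nearly contains it. The minimality comparison against a constant $c_0 \in \mathcal F_k$ plus the law of large numbers correctly gives the upper bound $P_n L f_{k,n} \le P_n L c_0 = O_P(1)$, and for the lower bound you do not need nonnegativity of the loss: since $f_{k,n} \in \mathcal F$ and $f$ minimizes $PL$ over the parent class, $PLf_{k,n} \ge PLf$, whence $P_n L f_{k,n} = (P_n - P)Lf_{k,n} + PLf_{k,n} \ge PLf - O_P(n^{-1/2})$ by Lemma \ref{thm:unif-cnv}. The paper's own proof is a close cousin of this repaired argument rather than of your route 1: it decomposes $P_n L f_{k,n} = (P_n - P)Lf_{k,n} + P(Lf_{k,n} - Lf_k) + PLf_k$, controls the middle term by Corollary \ref{thm:unif-conv-min} (which, like your route 2, exploits empirical minimality), and bounds the last term as finite --- note $PLf_k$ is sandwiched in $[PLf,\, PLc_0]$ using exactly the assumption that each class contains the constants --- at no point invoking $L_2(P)$-boundedness of $\mathcal M$. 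In short: keep your second route for the upper bound, replace the appeal to route 1 with the comparison $PLf_{k,n} \ge PLf$ for the lower bound, and the proof is complete and essentially matches the paper's.
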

\begin{proof}
$$P_nLf_{k,n} = (P_n -P)Lf_{k,n} + P(Lf_{k,n}-Lf_k) + PLf_k$$
The first and second terms are $O_P(n^{-1/2})$ by Lemma \ref{thm:unif-cnv} and Corollary \ref{thm:unif-conv-min} and the last term is a finite, fixed number.
\end{proof}

\section{Proof of Theorem \ref{thm:early-stopping}}

\THMstopping*

\begin{proof}

We show that the early-stopping solution asymptotically converges to the empirical minimizer in the final class as quickly as that minimizer converges to the truth (e.g. our early stopping solution converges as quickly to the HAL solution as the HAL solution converges to the truth). To do this we show that in large-enough samples, the validation error starts to trend monotonically downward in $k$ with high probability. Therefore early stopping does not kick in and with high probability we obtain the empirical minimizer in the final class.

\paragraph*{}
We begin by decomposing

\begin{align}
    P(Lf_{k^*_n,n} - Lf) = 
      P(Lf_{k^*_n,n} - Lf_n) 
    + \underbrace{P(Lf_{n} - Lf)}_{o_P(r_n)\ \text{by assumption}}
\end{align}

The proof is complete if the first term on the right-hand side is also $o_P(r_n)$; that is, if we can show that the early-stopping solution $f_{k^*_n,n}$ converges in expected loss to the final empirical minimizer $f_n$ at a rate of $o_P(r_n)$ or faster. To accomplish that we decompose

\begin{align}
    |P(Lf_{k^*_n,n} - Lf_n)|
    &= \left|\sum_k^{K_n} 1(k^*_n=k) P(Lf_{k,n} - Lf_n)\right| \\
    &\le 1(k^*_n = K_n) \cancel{|P(Lf_{K_n,n} - Lf_n)|}
      + 1(k^*_n < K_n) \max_{k < K_n} |P(Lf_{k,n} - Lf_n)| \label{eq:stopping-decomp}
\end{align}

The cancellation comes by our definition that $f_n = f_{K_n,n}$. Using that $1(k^*_n = K_n) = 1 - 1(k^*_n < K_n)$ we therefore establish the following sufficient conditions to ensure that the left-hand side above is $o_P(r_n)$:

\begin{enumerate}
    \item $\max_{k < K_n} |P(Lf_{k,n} - Lf_n)| = O_P(1)$
    \item $1(k^* < K_n) = o_P(r_n)$
\end{enumerate}

\paragraph*{}
{\bf Condition (1)}
Let
$$\bar k_n = \argmax_{k < K_n} |P(Lf_{k,n} - Lf_n)|$$
identify the model where the generalization error of the empirical minimizer is furthest to that of the empirical minimizer in the final model. Condition 1 reduces to showing $|P(Lf_{\bar k_n,n} - Lf_n)| = O_P(1)$ which is easily done:

\begin{align}
    |P(Lf_{\bar k_n,n} - Lf_n)| 
    &\leq 
    |(P_n-P)(Lf_{\bar k_n,n} - Lf_n)|
    + |P_n(Lf_{\bar k_n,n} - Lf_{n})| \\
    &\leq 
    |(P_n-P)(Lf_{\bar k_n,n} - Lf_n)| + |P_n Lf_{1,n}|
\end{align}

The first term is $O_P(n^{-1/2})$ by Lemma \ref{thm:unif-cnv} and the second is $O_P(1)$ by Corollary \ref{thm:bounded-loss}.

\paragraph*{}
{\bf Condition (2)}
Now we can proceed to show condition (2), which was that $1(k^* < K_n) = o_P(r_n)$. This follows from Lemma \ref{thm:prob-conv} as long as $P\{k^*<K_n\} \to 0$, i.e. the probability of early stopping goes to zero as the sample size increases. By definition this can only happen if the probability of validation error increasing from one iteration to the next goes to zero at every iteration. Formally, we need $P\{\max_{k \le K_n} \Delta_{k,n} > \epsilon_n\} \to 0$.

We will show $P\{ \Delta_{k_n,n} > \epsilon_n\} \to 0$ for any sequence $k_n$ which thus implies what we want as one example. Expanding from the definition of $\Delta_{k_n,n}$, we have
\begin{align*}
\Delta_{k_n,n} 
    &= \tilde P_n(Lf_{k_n+1,n} - Lf_{k_n,n}) \\
    &= (\tilde P_n-P)(Lf_{k_n+1,n} - Lf_{k_n,n}) + 
    P(Lf_{k_n+1,n} - Lf_{k_n,n})
\end{align*}
The first term is $O_P(n^{-1/2})$ by Lemma \ref{thm:unif-cnv} and the second term is bounded above by $O_P(n^{-1/2})$ by Corollary \ref{thm:nested-cnv}. Therefore we have that $\Delta_{k_n,n} \le O_P(n^{-1/2})$. Thus we can afford $\epsilon_n$ shrinking at any rate slower than $n^{-1/2}$ and still get $P\{ \Delta_{k_n,n} > \epsilon_n\} \to 0$ for any sequence $k_n$. This is satisfied by assumption.

\end{proof}

\end{document}